\documentclass{article}

\usepackage{microtype}
\usepackage{graphicx}
\usepackage{subcaption}
\usepackage{booktabs}
\usepackage{hyperref}

\usepackage[preprint]{icml2026}

\usepackage{amsmath}
\usepackage{amssymb}
\usepackage{mathtools}
\usepackage{amsthm}
\usepackage[capitalize,noabbrev]{cleveref}
\crefname{appendix}{Appendix}{Appendices}
\Crefname{appendix}{Appendix}{Appendices}

\theoremstyle{plain}
\newtheorem{theorem}{Theorem}[section]

\newtheorem{lemma}[theorem]{Lemma}

\theoremstyle{definition}

\theoremstyle{remark}

\newcommand{\R}{\mathbb{R}}
\newcommand{\sig}{\sigma}
\newcommand{\norm}[1]{\lVert #1 \rVert}

\icmltitlerunning{Two Missing Primitives of Softmax Attention}

\begin{document}

\twocolumn[
  \icmltitle{Abstention and Noise Filtering: \\ Two Missing Primitives
  of Softmax Attention}

  \begin{icmlauthorlist}
    \icmlauthor{Richard Zhe Wang}{sjfu}
  \end{icmlauthorlist}

  \icmlaffiliation{sjfu}{St.\ John Fisher University, Rochester, New York, USA}

  \icmlcorrespondingauthor{Richard Zhe Wang}{zwang@sjfc.edu,
  rzwang.research@gmail.com}

  \icmlkeywords{attention, value gating, attention sinks, superposition,
  language models}

  \vskip 0.3in
]

\printAffiliationsAndNotice{}

\begin{abstract}
Softmax attention has two structural gaps. A head cannot abstain,
because its weights sum to one, so it outputs something even when
nothing is relevant. Nor can it filter what it reads,
because its output is a weighted average of value vectors, passing
interference as faithfully as signal. We call these
missing primitives \emph{abstention} and \emph{noise filtering}.
Recent studies report that gating the value pathway improves
pretraining but attribute the gain to different causes. We show that
a value gate partly supplies both primitives, which unifies the
reported causes as views of one gain. We give each primitive its own
mechanism in matched models of 10M
to 350M parameters and measure what each contributes. The gain from gating is
almost entirely abstention at 10M, whereas by 350M filtering
contributes as much as abstention, so what a study observes depends
on its scale. The two benefits are
largely additive, with a small overlap. A gate determined
by each value alone leaves the attention sink in place,
whereas a query-controlled mechanism removes it. Injecting interference into the value reads shows that abstention and
filtering protect against it in distinguishable ways. The same patterns
appear in pretrained models up to 20B parameters.
\end{abstract}

% =====================================================================
\section{Introduction}\label{sec:intro}

Softmax attention has two structural gaps. The first gap is that a head
cannot abstain. Its attention weights sum to one, so a head whose
query finds nothing relevant in the context must still place its
full weight somewhere. Models cope by placing the unwanted weight on
a fixed position, usually the first token, known as the attention
sink \citep{xiao2024streamingllm, gu2025sink, barbero2025first},
whose value vector they learn to make small. The second gap is that a
head cannot filter the noise in the values it
reads. Its output is a weighted average of the value vectors it attends
to, and a weighted average passes interference as faithfully as the real
signal. Because of
superposition in the residual stream \citep{elhage2022toy}, a value
vector carries interference from features that are present but
irrelevant to the head. Nothing in the value pathway of standard
softmax attention can remove this interference. We call these two missing
primitives \emph{abstention} and \emph{noise filtering}. In this
paper we argue that these two primitives are what a learned gate on
the value pathway of attention supplies, and we show how their
contributions change with scale.

Both gaps have been partly filled by recent studies, though not
identified as such. Gated attention \citep{qiu2025gated},
value-state gated attention \citep{bu2025vga}, and GLU attention
\citep{wang2025glu} insert a learned gate into the output or value
pathway of attention and
report consistent gains in pretraining loss. \Citet{fesser2026unifying}
relate such gates to attention sinks. While these studies agree that
gates help, they differ in the cause to
which they attribute the gain. The causes offered include added
nonlinearity and sparsity, the elimination of attention sinks, the
mitigation of extreme-token phenomena, and the ability of a head to
perform no operation. No prior studies have systematically
investigated these individual solutions within the unifying framework
of abstention and noise filtering. This paper attempts to fill this
gap.

Value gating supplies part of each missing primitive. It can stand in
for \emph{abstention} in part, because attenuating every value a head
attends to shrinks the head's output, which is what an abstaining
head would produce. It also provides \emph{noise filtering} when the
multiplier is computed
from the value itself, as in \citet{bu2025vga} and \citet{wang2025glu}.
A value scaled by a nonlinear function of its own content can be
suppressed when that content is interference.
A gate computed from the query and applied to the head's output, as
in \citet{qiu2025gated}, can abstain but cannot filter individual
values. A short lemma (\cref{sec:valueslot}) shows that this decomposition holds as an identity. Because one gate supplies both, a study that measures its benefit
observes a mixture of two effects, which helps explain why the studies
above reach different conclusions.

To separate the two primitives, we give each its own mechanism
(\cref{sec:variants}) and train the resulting variants in matched
models of 10M to 350M parameters with paired seeds. For abstention we
use a learned per-head sink logit, as in gpt-oss
\citep{openai2025gptoss}. It adds to every softmax a phantom key, an
extra key that no token occupies, with a learnable logit and a value
fixed at zero. For filtering we use two gate forms, a norm gate that
thresholds the
norm of the value vector and a projection gate that applies a learned
linear function of its content. Combined variants pair each gate
with the sink logit.
\Cref{sec:attribution} asks which primitive a gate's benefit comes from
at each scale and whether the two benefits are separate, and
\cref{sec:fingerprint} asks what the filter removes.

Our contributions are four-fold. (i) We propose two missing primitives
of softmax attention, abstention
and noise filtering, as a unifying framework for value gating. We show
that a scalar gate on the values decomposes into the two, and we
construct mechanisms that supply each on its own (\cref{sec:variants}).
(ii) We show that the composition of the benefit changes with scale,
from abstention at 10M to an equal share of filtering at 350M. We
explain the change
through the baseline's attention sinks (\cref{sec:attribution}). (iii)
We show that the two benefits are
separate mechanisms, because they are largely additive, with a
small overlap (\cref{sec:additivity}).
(iv) We introduce injection experiments that perturb the value reads at
evaluation time and show with them that abstention and filtering protect
against injected interference in distinguishable ways
(\cref{sec:fingerprint}).

% =====================================================================
\section{Related Work}\label{sec:related}
% Literature sweep refreshed 2026-09-18; every entry verified on arXiv.

\paragraph{Value and output gating.} Gates on the output or the values
of attention have a longer history as components of efficient attention
\citep{hua2022flash, shleifer2021normformer} and as a remedy for
outliers \citep{bondarenko2023quantizable}. The recent interest in them
follows the gated attention of \citet{qiu2025gated}. Their gate is a
per-head sigmoid, computed from the query token's representation and
applied to the output of attention. They attribute the resulting gains
to the added
nonlinearity and sparsity, and they observe that attention sinks
disappear. Value-state gated attention \citep{bu2025vga} instead
places the gate on each value and computes it from the value itself, as
a sigmoid of a learned linear
projection. They attribute the benefit to the mitigation of
extreme-token phenomena, namely attention sinks and the massive
activations that accompany them. GLU attention \citep{wang2025glu} reports gains from a gated linear unit
applied to the values. \Citet{fesser2026unifying} distinguish sinks that let a head perform no
operation from sinks that broadcast information. They hold that gating
addresses the first kind, whereas extra learned tokens, called
registers, address the second. These studies agree that
gating helps and differ in how they explain it. In the framework of this paper, each explanation describes part of what
a value gate supplies, and no study separates the two primitives
experimentally. The design of \citet{qiu2025gated} has since been
deployed in production models \citep{su2026sinksurvey}, and further
variants gate at both the value and output stages \citep{shen2026gsa},
combine gate forms \citep{zhou2026hyga}, replace the gate by an affine
scale \citep{bae2026affine}, or include per-head gating in a small-model
training recipe \citep{grigorev2026imu1}, while
\citet{nguyen2026statistical} give a theoretical account in terms of
sample efficiency.

\paragraph{Abstention and attention sinks.} \Citet{miller2023offbyone}
notes that softmax attention cannot output
nothing and proposes adding one to the softmax denominator so that a
head can. Trained models compensate for this inability by forming
attention sinks \citep{xiao2024streamingllm}, and the production
model gpt-oss gives each head a learnable sink logit for the same
purpose \citep{openai2025gptoss}. Sinks arise
as biases in the keys, and their strength grows with scale
\citep{gu2025sink}. They serve as an approximate no-operation that
prevents a head from
mixing too much information across positions \citep{barbero2025first}.
\Citet{sukenik2026sink} prove that a sink pattern is equivalent to a
zero output and that sinks act as attention switches that prevent
oversmoothing. Sinks are accompanied by massive activations and crushed value norms
\citep{sun2024massive, guo2024activedormant} and have been interpreted
as reference frames \citep{ruscio2025sinking}. Further studies show that
a learnable bias makes them disappear \citep{qiu2026unified}, that a
sink logit acts as implicit per-head gating \citep{fu2026sinkmoe}, that
sinks are necessary in some tasks \citep{ranmilo2026necessary}, and that
a null state in the softmax stabilizes attention residuals
\citep{luo2026oasis}. Alternatives to a
sink include rectified softmax \citep{zuhri2025softpick} and register
tokens \citep{darcet2024registers, jiang2025registersfree}, and
\citet{su2026sinksurvey} survey the area. Most closely related to our analysis of sinks in the trained models,
\citet{sun2026spike} train a 7B model from scratch and find that a
gate computed from the current token's representation eliminates
sinks.
They conclude that a sink is an implicit gate that the model abandons
once an explicit one is provided. In the terms of this paper, a sink, a
sink logit, and a query-computed
gate are all abstention mechanisms. \paragraph{Per-key scalar gates on routing.}
A different family of gates acts on the attention weights, which route
each query to its keys, rather than on the values. The weights therefore
still sum to one after the gate is applied. Examples
include the learned per-key decay of the forgetting transformer
\citep{lin2025fox} and the learned mask of selective attention
\citep{leviathan2025selective}. Further members include the
adaptive-pruning and fine-grained variants of the forget gate
\citep{lin2025acp, fix2026} and token pruning that removes keys outright
\citep{rao2021dynamicvit}, and \citet{jamil2026routing} use the same two words for a different
decomposition, the symmetric and skew-symmetric parts of the score
matrix, which concerns queries and keys rather than values. The norm gate of this paper builds on
the observation of
\citet{kobayashi2020norm} that what a position contributes to a head's
output is the product of its attention weight and the norm of its value
vector. 

\paragraph{Superposition, interference, and thresholding.}
The need for noise filtering follows from superposition.
\Citet{elhage2022toy} establish that a model can represent more features
than it has dimensions, a regime known as superposition, in which the
features interfere with one another. The theory of computation in
superposition shows that a linear readout
cannot reduce this interference below a floor, whereas a thresholding nonlinearity, such as a gate, can
\citep{hanni2024cis, adler2024complexity, borobia2026threshold}.
\Citet{adler2025capacity} gives the corresponding capacity argument for
the query-key side. \Citet{prieto2026correlations} show that
interference between correlated features can be constructive while a
nonlinearity is still required to prevent false positives, and
\citet{stevinson2025adversarial} show that adversarial attacks exploit
interference between superposed features. Two other recent works modify the value or query pathway on grounds
complementary to ours, one by replacing
deep-layer values with context-free lookups \citep{he2026bov} and one by
adding a nonlinearity to the query projection
\citep{karbevski2026nonlinear}. 

% \paragraph{Methodology.}
% \citet{narang2021modifications} find that most transformer
% modifications fail to transfer across implementations, and
% \citet{zhao2026modifications} update that finding through 2026 with
% a measured noise floor, that is, the size of the loss differences
% that random seeds alone produce. Because our effect sizes lie in the
% regime
% they describe, every comparison in this paper is a paired per-seed
% difference.

% =====================================================================
\section{Mechanisms and Model Variants}\label{sec:variants}

\subsection{Setup and the Two Primitives}\label{sec:setup}

Consider a single causal attention head with head dimension $d$.
For a query position $i$, the head forms the logits
$s_{ij} = q_i^\top k_j / \sqrt{d}$ over the key positions $j \le i$
and converts them into routing weights
\begin{equation}
  a_{ij} = \frac{\exp(s_{ij})}{\sum_{k \le i} \exp(s_{ik})}.
  \label{eq:softmax}
\end{equation}
It then returns the aggregate $y_i = \sum_{j \le i} a_{ij}\, v_j$,
where $q_i = W_Q x_i$, $k_j = W_K x_j$, and $v_j = W_V x_j \in \R^d$
are the query, key, and value vectors computed by learned
projections from the residual-stream inputs $x_i$ and $x_j$. Every
quantity here depends on the parameters and on the input, and
$a_{ij}$ depends on $q_i$ and on every key $k_1, \ldots, k_i$. We
refer to $a_{ij}$ as
\emph{routing} and $v_j$ as a \emph{read}, because $v_j$ is the content 
that the head obtains when it attends to position $j$.

Two key properties follow from this standard setup. First, the
routing weights of each row sum to one, so the head \emph{must}
distribute its full attention mass of one even when no prior position
is worth reading. This inability to attend to nothing is a known
limitation of softmax attention \citep{miller2023offbyone,
xiao2024streamingllm}. 

Second, the output $y_i$ is a linear function
of the reads $v_j$. Hence, a read that consists mostly of interference
is aggregated with the same fidelity as a read that consists mostly
of signal. Nothing in the aggregation can therefore suppress a read. In these
terms, abstention is the ability of a head to place routing mass on
nothing, and noise filtering is the ability to attenuate a read on
the basis of its content. Abstention is determined on the routing side, by
whether the query finds
anything in the context worth attending to. Noise filtering is determined
on the value side, by whether the content of a read should be passed
into the residual stream. The distinction lies in what determines the
suppression, not in its effect on the output.

\subsection{Gating in the Value Slot}\label{sec:valueslot}

We say that a gate acts in the \emph{value slot} when it multiplies
the reads and in the \emph{routing slot} when it modifies the
routing weights $a_{ij}$. A value gate multiplies each read by a
scalar $g_j \in (0, 1)$ computed from the read $v_j$ alone, so that
the routing of \cref{eq:softmax} is unchanged and the output becomes
\begin{equation}
  y_i' = \sum_{j \le i} a_{ij}\, g_j\, v_j.
  \label{eq:valuegate}
\end{equation}

\begin{lemma}[Decomposition of a value-slot gate]\label{lem:scalar}
Let $a_i = (a_{i1}, \ldots, a_{iT})$ be routing weights with
$\sum_j a_{ij} = 1$, let $g = (g_1, \ldots, g_T)$ be non-negative
scalars, one per key position, and let $Z_i = \sum_j a_{ij}\, g_j$. If
$Z_i > 0$,
then
\begin{equation}
  \sum_j a_{ij}\, g_j\, v_j \;=\; Z_i \sum_j \tilde a_{ij}\, v_j,
  \qquad
  \tilde a_{ij} = \frac{a_{ij}\, g_j}{Z_i},
  \label{eq:lemma}
\end{equation}
where $\tilde a_i = (\tilde a_{i1}, \ldots, \tilde a_{iT})$ is again a
probability vector.
\end{lemma}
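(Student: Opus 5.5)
The identity is a direct algebraic verification, so I will check the two claims in the statement of \cref{lem:scalar} in turn. No earlier result is needed beyond the definitions of $Z_i$ and $\tilde a_{ij}$ in the statement.

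\textbf{The identity \cref{eq:lemma}.} Because $Z_i > 0$, the quotient $\tilde a_{ij} = a_{ij} g_j / Z_i$ is well defined. Multiplying it back by $Z_i$ gives $Z_i \tilde a_{ij} = a_{ij} g_j$ for every $j$. Hence $Z_i \sum_j \tilde a_{ij} v_j = \sum_j (Z_i \tilde a_{ij}) v_j = \sum_j a_{ij} g_j v_j$, using only linearity of the finite sum. This holds for vectors $v_j \in \R^d$ componentwise, so nothing further is needed.

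\textbf{$\tilde a_i$ is a probability vector.} There are two checks. First, non-negativity: $a_{ij} \ge 0$ as softmax weights (more generally as routing weights summing to one, which the statement implicitly takes to be non-negative), $g_j \ge 0$ by hypothesis, and $Z_i > 0$, so each $\tilde a_{ij} \ge 0$. Second, normalization: $\sum_j \tilde a_{ij} = Z_i^{-1} \sum_j a_{ij} g_j = Z_i^{-1} Z_i = 1$.

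\textbf{Remarks for the write-up.} The hypothesis $\sum_j a_{ij} = 1$ is not actually used for the identity. It matters only for the interpretation: $Z_i$ is then an $a_i$-weighted average of the $g_j$, so $Z_i \in [\min_j g_j, \max_j g_j]$, and in particular $Z_i \in (0,1)$ when $g_j \in (0,1)$. This makes $Z_i$ readable as an abstention-like output scale, and the reweighting $a_{ij} \mapsto \tilde a_{ij}$ as filtering.

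The only potential obstacle is the degenerate case $Z_i = 0$. It is excluded by hypothesis, and under the paper's setting it cannot occur anyway, since $g_j > 0$ and the softmax gives $a_{ij} > 0$. I would note this briefly and otherwise present the proof as the two-line computation above.
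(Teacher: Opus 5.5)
Your proposal is correct and takes the same approach as the paper. Like the paper, you multiply and divide each term by $Z_i$ and then verify $\sum_j \tilde a_{ij} = Z_i/Z_i = 1$. Your explicit non-negativity check, from $a_{ij}\ge 0$, $g_j\ge 0$ and $Z_i>0$, is a small but useful addition: the paper calls $\tilde a_i$ a probability vector but verifies only the normalization.
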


\begin{proof}
Since $Z_i > 0$, each term can be multiplied and divided by $Z_i$,
\begin{equation*}
  \sum_j a_{ij}\, g_j\, v_j
  = Z_i \sum_j \frac{a_{ij}\, g_j}{Z_i}\, v_j
  = Z_i \sum_j \tilde a_{ij}\, v_j,
\end{equation*}
and $\sum_j \tilde a_{ij} = \sum_j a_{ij}\, g_j / Z_i = Z_i / Z_i = 1$.
\end{proof}

The lemma says that applying a gate $g_j$ in the
value slot equals applying the identical per-key scalars $g_j$ in the
routing slot and then scaling the head output by $Z_i$. 
The routing-slot gate redistributes weight among the reads, and $Z_i$
is the fraction of the routing mass that survives the suppression.
$Z_i$ depends on the query through $a_{ij}$, shrinks when the mass of
query $i$ rests on suppressed reads, and reaches zero only if every
attended read has $g_j = 0$. Both factors are determined by the reads,
so a value gate filters in the sense of \cref{sec:setup} even where
its effect on a row is a scalar attenuation, and we count that
attenuation as filtering. A gate confined to the routing slot opens almost fully in
training and yields no benefit (\cref{tab:routing}).

\subsection{The Variants}\label{sec:armlist}

This section describes the eight attention variants of the study.
Together they supply abstention alone, noise filtering alone, both,
or neither, so that the contribution of each primitive can be
measured on its own. \Cref{tab:variants} in \cref{app:training}
lists them with their parameter costs. 

\paragraph{Baseline.} The baseline is standard causal attention as
in \cref{eq:softmax}, with no gate and no abstention mechanism.

\paragraph{Off-by-one and sink logit.} Following
\citet{miller2023offbyone}, the off-by-one variant adds one to the
denominator of \cref{eq:softmax}. This is equivalent to adding an
extra key position that no token occupies, with logit zero and a value of zero. We call this extra position the
\emph{phantom}. Whatever mass the query places on the phantom is
removed from the output, because the phantom contributes nothing to
the aggregate. The sink-logit variant replaces the added constant one by
$\exp(b_h)$, where $b_h$ is a learnable scalar per head initialized to
zero. This is the per-head sink of gpt-oss \citep{openai2025gptoss},
a learned logit with no value vector. The routing weights become
\begin{equation}
  a_{ij} = \frac{\exp(s_{ij})}{\exp(b_h) + \sum_{k \le i} \exp(s_{ik})},
  \label{eq:sinklogit}
\end{equation}
with the off-by-one variant recovered at $b_h = 0$. Each head thereby
learns how readily it abstains, while the phantom value remains zero.

\paragraph{Sink token.} The sink-token variant adds a learned extra
token at the start of every sequence \citep{xiao2024streamingllm}.
Its key and its value are both learned, so the mass it absorbs still delivers its learned value to the output. In particular, any residual content in
its value leaks into every output that attends to it. It is included to test whether the abstained mass must contribute
nothing.

\paragraph{Norm gate.} We propose a new variant that attenuates each read by a
sigmoid of the read's own norm,
\begin{equation}
  g_j = \sig\!\left(\beta\left(\frac{\norm{v_j}}{\sqrt{d}} -
  \tau_h\right)\right),
  \qquad v_j \leftarrow g_j\, v_j,
  \label{eq:normgate}
\end{equation}
with a fixed sharpness $\beta = 8$ and a learnable threshold $\tau_h$
per head. The threshold is initialized to $-4/\beta$, so that $g_j \ge \sig(4)
\approx 0.98$ for every read at initialization. It is excluded from
weight decay, which would otherwise pull it toward filtering regardless
of whether filtering helps. Because the
gate depends on the read only through its norm, it is invariant to
rotations of the value basis. The cost of this invariance is that it
cannot single out
particular directions of the value space. 
This invariance makes the norm gate the appropriate instrument for
establishing whether noise filtering helps a model at all, because
any benefit it produces must come from the magnitude of the read
alone.

\paragraph{Projection gate.} The projection-gate variant attenuates each
read by a sigmoid of a learned linear function of the read,
\begin{equation}
  g_j = \sig\!\left(w_h^\top v_j + b_h\right),
  \qquad v_j \leftarrow g_j\, v_j,
  \label{eq:projgate}
\end{equation}
which is the form of value-state gated attention \citep{bu2025vga}. We initialize it with $w_h = 0$ and $b_h = 4$, so that it starts as the
same near-identity as the norm gate, and we exclude both parameters from
weight decay for the same reason.
Unlike the norm gate, the projection gate is
direction-selective, meaning that it can pass or block a read
according to the direction in which the read points in the value
space. The two gate forms together let us ask whether the quantity that
determines the gate value affects what a filter can and cannot remove.

\paragraph{Two combined variants.} The variant \emph{sink+norm} applies
the gpt-oss sink logit
\citep{openai2025gptoss} and the norm gate together. The variant
\emph{sink+proj} applies the same sink logit and the projection gate
together. When abstention is already supplied by the phantom key in the
sink logit,
any remaining benefit of a gate must then be attributed to noise filtering.

% =====================================================================
\section{The Composition of the Benefit across Scale}\label{sec:attribution}
% Numbers: paper/tables/loss.tex (tables.py), figures.py fig_trend,
% autopsy_m350.log, mech_m350.log.

This section asks which primitive a gate's benefit comes from, how the
answer changes with scale, and whether the two benefits are separate.

\subsection{Experimental Setup}\label{sec:setup4}

We train every variant from scratch on the FineWeb-Edu web-text corpus
\citep{penedo2024fineweb} at four model sizes. We call the sizes tiers
and label them by approximate non-embedding parameter count (10M, 50M,
124M, and 350M; \cref{app:training}). All variants at a tier share
everything, including the data order,
except the attention mechanism.
We train three seeds per variant at every tier. Because seed noise dominates comparisons at this scale
\citep{zhao2026modifications}, every effect is reported as a paired
difference between variants that share a seed, averaged over seeds and given with its standard error.
\Cref{tab:lossmain} reports the paired improvements of the main variants
over the baseline and of the combined variants over the sink logit, and
\cref{tab:loss} in \cref{app:tables} reports every variant. The latter is what a gate adds once a sink logit is in place. 
Improvements are quoted as positive magnitudes. 
Zero-shot evaluations at 124M show the same ordering in perplexity,
with accuracy differences within seed noise (\cref{tab:downstream}).

\begin{figure*}[t]
  \begin{center}
    \centerline{\includegraphics[width=\textwidth]{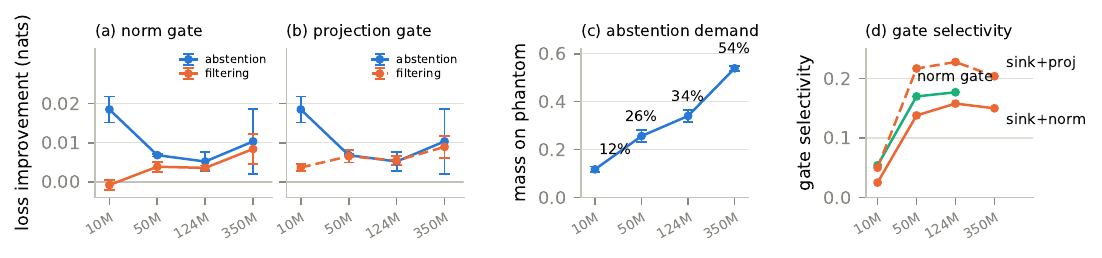}}
    \caption{The two primitives across scale. (a,b) For each gate form, the
    abstention increment (sink logit over baseline) and the filtering
    increment (combined variant over sink logit). The filtering
    increment grows with scale, and the abstention increment shrinks
    from 10M to 124M. Bars are standard errors over three seeds. (c) The average attention mass the sink-logit model places on its
    phantom grows with scale (bars are standard deviations over seeds). This is the phantom's softmax weight, averaged over every query,
    head, layer, and validation sequence at the tier's training context.
    (d) Gate selectivity, the attention-weighted standard deviation of the
    gate values among the reads a query attends to, averaged over
    queries, heads, and layers (seed 0 per tier). It is zero for a gate
    that attenuates every attended read equally.}
    \label{fig:trend}
  \end{center}
  \vskip -0.2in
\end{figure*}

\subsection{Two Opposite Trends with Scale}\label{sec:cross}

\begin{table}[t]
  \caption{Paired validation-loss improvements (nats), mean over
  seeds with its standard error beneath. The lower block is
  the filtering increment, the improvement of each combined variant
  over the sink logit.}
  \label{tab:lossmain}
  \begin{center}
    \begin{footnotesize}
      \setlength{\tabcolsep}{6pt}
      \begin{tabular}{lrrrr}
\toprule
Variant & 10M & 50M & 124M & 350M \\
\midrule
\multicolumn{5}{l}{\emph{Improvement over the baseline}} \\
sink logit & \begin{tabular}[c]{@{}r@{}}$0.0185$\\{\scriptsize$\pm 0.0033$}\end{tabular} & \begin{tabular}[c]{@{}r@{}}$0.0068$\\{\scriptsize$\pm 0.0003$}\end{tabular} & \begin{tabular}[c]{@{}r@{}}$0.0052$\\{\scriptsize$\pm 0.0025$}\end{tabular} & \begin{tabular}[c]{@{}r@{}}$0.0103$\\{\scriptsize$\pm 0.0084$}\end{tabular} \\
norm gate & \begin{tabular}[c]{@{}r@{}}$0.0037$\\{\scriptsize$\pm 0.0026$}\end{tabular} & \begin{tabular}[c]{@{}r@{}}$0.0069$\\{\scriptsize$\pm 0.0008$}\end{tabular} & \begin{tabular}[c]{@{}r@{}}$0.0059$\\{\scriptsize$\pm 0.0012$}\end{tabular} & -- \\
projection gate & \begin{tabular}[c]{@{}r@{}}$0.0113$\\{\scriptsize$\pm 0.0029$}\end{tabular} & \begin{tabular}[c]{@{}r@{}}$0.0094$\\{\scriptsize$\pm 0.0011$}\end{tabular} & \begin{tabular}[c]{@{}r@{}}$0.0091$\\{\scriptsize$\pm 0.0011$}\end{tabular} & -- \\
sink+norm & \begin{tabular}[c]{@{}r@{}}$0.0177$\\{\scriptsize$\pm 0.0037$}\end{tabular} & \begin{tabular}[c]{@{}r@{}}$0.0106$\\{\scriptsize$\pm 0.0015$}\end{tabular} & \begin{tabular}[c]{@{}r@{}}$0.0087$\\{\scriptsize$\pm 0.0021$}\end{tabular} & \begin{tabular}[c]{@{}r@{}}$0.0187$\\{\scriptsize$\pm 0.0060$}\end{tabular} \\
sink+proj & \begin{tabular}[c]{@{}r@{}}$0.0221$\\{\scriptsize$\pm 0.0029$}\end{tabular} & \begin{tabular}[c]{@{}r@{}}$0.0133$\\{\scriptsize$\pm 0.0018$}\end{tabular} & \begin{tabular}[c]{@{}r@{}}$0.0106$\\{\scriptsize$\pm 0.0022$}\end{tabular} & \begin{tabular}[c]{@{}r@{}}$0.0193$\\{\scriptsize$\pm 0.0057$}\end{tabular} \\
\midrule
\multicolumn{5}{l}{\emph{Improvement over the sink logit}} \\
sink+norm & \begin{tabular}[c]{@{}r@{}}$-0.0008$\\{\scriptsize$\pm 0.0013$}\end{tabular} & \begin{tabular}[c]{@{}r@{}}$0.0030$\\{\scriptsize$\pm 0.0009$}\end{tabular} & \begin{tabular}[c]{@{}r@{}}$0.0036$\\{\scriptsize$\pm 0.0006$}\end{tabular} & \begin{tabular}[c]{@{}r@{}}$0.0084$\\{\scriptsize$\pm 0.0039$}\end{tabular} \\
sink+proj & \begin{tabular}[c]{@{}r@{}}$0.0036$\\{\scriptsize$\pm 0.0008$}\end{tabular} & \begin{tabular}[c]{@{}r@{}}$0.0065$\\{\scriptsize$\pm 0.0016$}\end{tabular} & \begin{tabular}[c]{@{}r@{}}$0.0054$\\{\scriptsize$\pm 0.0012$}\end{tabular} & \begin{tabular}[c]{@{}r@{}}$0.0089$\\{\scriptsize$\pm 0.0027$}\end{tabular} \\
\bottomrule
\end{tabular}

    \end{footnotesize}
  \end{center}
  \vskip -0.1in
\end{table}

\Cref{fig:trend}(a,b) plots two main effects across the four tiers,
defined as follows. The abstention increment is the paired
improvement of the sink logit over the baseline, measured in the reduction of the validation loss (nats). 
The filtering increment is the paired improvement of a combined variant over the
sink logit, that is, what a gate adds once an abstention mechanism is already in place. 
The two effects move in opposite directions from 10M to 124M, and
the filtering increment keeps growing at 350M.
\Cref{fig:trend}(a) shows that the abstention increment falls from
0.0185 nats at 10M to 0.0052 at 124M, while the filtering increment
of the norm gate rises from zero at 10M to 0.0084 at 350M.
\Cref{fig:trend}(b) shows the projection gate's filtering increment
similarly rises from zero at 10M to 0.0089 at 350M, and the two
curves cross between 50M and 124M. At 350M the three seeds give abstention increments of 0.0013,
0.0271, and 0.0026, so the mean of 0.0103 is not resolved beyond its
seed spread. In two of the three seeds the filtering increments at 350M exceed
the abstention increment several times over.

\cref{fig:trend}(c) tracks the average attention mass that the sink-logit 
model places on its phantom, which grows from 12\% at 10M to 54\% at 350M, 
indicating that the demand for abstention grows with scale even as
the benefit of supplying it shrinks from 10M to 124M. 

\Cref{fig:trend}(d) shows the gate selectivity of the trained models,
which measures whether a gate discriminates among the reads a query
attends to or attenuates them all alike. For one query $i$, the
attention weights over its keys form one row of the attention matrix,
and the gate assigns each key $j$ in that row a value $g_j$. We
renormalize the row over the real keys, $w_{ij} = a_{ij} / \sum_k
a_{ik}$, so that the mass on the phantom is excluded where one exists,
and use these weights to compute the mean gate value of the row, $m_i =
\sum_j w_{ij}\, g_j$, and the weighted variance $\sum_j w_{ij}\,(g_j -
m_i)^2$. The square root of that variance is the selectivity of the row.
It is zero when every attended read receives the same gate value,
whether open or attenuated, and it grows as the gate passes some of the
attended reads and blocks others. We average the row values over
queries, heads, sequences, and layers, on eight validation batches at
the training context, for the seed-0 model of each variant. 

Gate selectivity provides evidence separating the two primitives at the level of the
gate itself, independently of the validation loss. A gate that supplied only abstention would uniformly 
scale each row as a whole, i.e., $g_j = g$ for all $j$. Thus, the standard deviation of the gate values $g_j$ would be zero within a row (i.e. a single query). 
But filtering requires the opposite: different gate values are applied for different reads within a single query, 
and each value is gated individually. Thus, the standard deviation of the gate values $g_j$ would be positive within a row.
\Cref{fig:trend}(d) shows that gate selectivity emerges between 10M and 50M,
where the benefits of filtering appear, and stays high thereafter as the scale increases.

\subsection{Attention Sinks in the Trained Models}\label{sec:autopsy}

The shrinking abstention increment raises the question of what the
baseline does instead of abstaining, and an analysis of the trained
models answers it. \Cref{fig:autopsy} reports, for each 124M variant,
where its attention mass goes and what happens to the value read at
position 0. The baseline places 6.0\% of its mass on the first key
position and drives the value read there toward zero, with a norm ratio
of 0.42. This is the attention sink \citep{xiao2024streamingllm, gu2025sink,
barbero2025first}. With a sink logit, the mass on
position 0 falls to 0.5\%, the norm ratio recovers to 0.99, and the mass
moves to the phantom, and sink+proj behaves the same way.

\begin{figure}[t]
  \begin{center}
    \centerline{\includegraphics[width=\columnwidth]{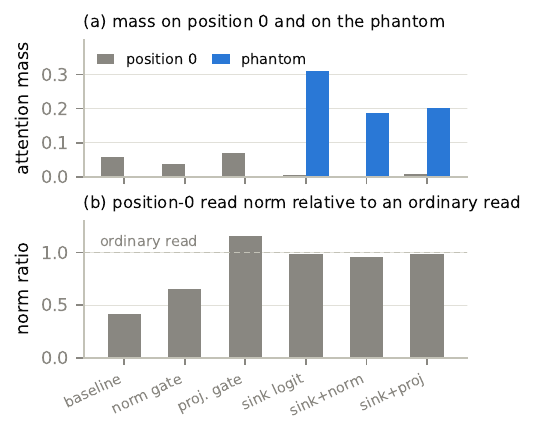}}
    \caption{Where the sink lives at 124M (1024-token context, seed
    0). (a) Attention mass on the first key position, as a fraction of the mass
    on real keys, and on the phantom, as a fraction of all mass. The
    baseline and the norm gate park mass on the first
    key position, every variant with a phantom routes it there
    instead, and the projection gate keeps a position-0 sink of the
    baseline's size. (b) The norm ratio divides the mean norm of the value vector at position
    0, taken before any gate is applied, by the mean norm of the value
    vectors at positions 4 and later in the same head and the same
    sequences, averaged over heads and layers on fixed validation
    blocks. The baseline
    crushes that read, the
    norm gate crushes it less, and every variant with a phantom or a
    projection gate keeps it ordinary.}
    \label{fig:autopsy}
  \end{center}
  \vskip -0.2in
\end{figure}

Both gate-only variants keep a position-0 sink. The norm gate parks
3.7\% of its mass there with a norm ratio of 0.65, and the projection
gate parks 7.1\% with an uncrushed read that the gate closes instead.

The same measurement at 350M
(\cref{tab:sink} in \cref{app:telemetry}) gives the trend with scale.
The baseline parks 11\% of its mass on position 0 with a norm ratio
of 0.36. The sink-logit model parks 0.7\%
there, keeps the norm ratio at 1.01, and places 53\% of its mass on
the phantom. These results suggest that the abstention a large baseline obtains by
imitation comes closer to what a built-in mechanism provides, which would explain why the loss saved by that mechanism shrinks
from 10M to 124M even as the mass it absorbs grows. The production model gpt-oss-20b, which has a sink
logit, shows the same pattern at 20B parameters, with 34\% to 48\% of
its mass on the phantom and an uncrushed position-0 read
(\cref{app:incidence}).

% =====================================================================
\subsection{Additivity of the Two Primitives}\label{sec:additivity}
% Numbers: paper/tables/additivity.tex (tables.py).

This section asks whether abstention and filtering are separate
mechanisms, by measuring how their benefits combine. If a value gate and a sink logit were two descriptions of one mechanism,
supplying both would add nothing over the better one, and if they were
independent their benefits would simply add.
Lemma~\ref{lem:scalar} suggests the reality lies in between, since a
gate used alone also shrinks whole rows through $Z_i$, and a sink
logit can withdraw from some of the same rows.

\begin{table}[t]
  \caption{Two-by-two decomposition of the benefit of each gate form
  (validation-loss improvement over the baseline, nats, paired per
  seed). Sink, gate, and both are the improvements from the sink
  logit alone, the gate alone, and the combined variant. Overlap is sink
  plus gate minus both, filter is both minus sink, and share is filter
  as a percentage of both.}
  \label{tab:additivity}
  \begin{center}
    \begin{footnotesize}
      \setlength{\tabcolsep}{3pt}
      \begin{tabular}{lrrrrrr}
\toprule
Tier & Sink & Gate & Both & Overlap & Filter & Share \\
\midrule
\multicolumn{7}{l}{\emph{Norm gate}} \\
10M & $0.0185$ & $0.0037$ & $0.0177$ & $0.0045$ & $-0.0008$ & $-5\%$ \\
50M & $0.0068$ & $0.0069$ & $0.0106$ & $0.0030$ & $0.0030$ & $28\%$ \\
124M & $0.0052$ & $0.0059$ & $0.0087$ & $0.0024$ & $0.0036$ & $41\%$ \\
350M & $0.0103$ & -- & $0.0187$ & -- & $0.0084$ & $45\%$ \\
\midrule
\multicolumn{7}{l}{\emph{Projection gate}} \\
10M & $0.0185$ & $0.0113$ & $0.0221$ & $0.0076$ & $0.0036$ & $16\%$ \\
50M & $0.0068$ & $0.0094$ & $0.0133$ & $0.0029$ & $0.0065$ & $49\%$ \\
124M & $0.0052$ & $0.0091$ & $0.0106$ & $0.0037$ & $0.0054$ & $51\%$ \\
350M & $0.0103$ & -- & $0.0193$ & -- & $0.0089$ & $46\%$ \\
\bottomrule
\end{tabular}

    \end{footnotesize}
  \end{center}
  \vskip -0.1in
\end{table}

\Cref{tab:additivity} reports the additive combination of the two primitives at each scale. 
It shows the improvements over the baseline from the sink logit alone, from the gate alone, and from both together, as well as the overlap.  
The overlap is the amount by which the sum of the two single-mechanism improvements exceeds the
improvement from both simultaneously. The filter column is the improvement of the
combined variant over the sink logit.

\Cref{tab:additivity} shows that the improvements from the two
primitives are approximately \emph{additive} with a small \emph{overlap}. Five patterns stand out. 
First, the combined variant improves on the baseline by more than
either mechanism alone.
Second, the overlap is small in absolute terms, between 0.0024 and
0.0076 nats at the three tiers where it can be measured.
Third, the filter column is positive at every tier for the projection
gate and from 50M upward for the norm gate, and its share of the combined benefit rises from 16\% at 10M to 51\%
at 124M for the projection gate and from zero to 41\% for the norm
gate. At 350M the shares are 46\% and 45\%. This suggests that filtering grows from a minor
contributor to an equal partner of abstention over this range. 
Fourth, the projection gate's filtering contribution exceeds the
norm gate's at every tier.
Finally, the combined variant with the projection gate has the lowest
mean validation loss at every tier and improves on the sink logit in
every paired seed.

% =====================================================================
\section{The Noise-Injection Experiments for Value Gates}\label{sec:fingerprint}

In this section, we investigate whether the value gates remove noise
from the value vectors, and the degree to which they do, based on a
series of controlled experiments with injected interference in the
value reads at inference time. The idea is to corrupt the reads that a
filter is built to reject, measure the increase in validation loss
under each model variant, and compare. If value gates are filters, a
model with a gate should suffer a smaller increase than a model
without one.

\subsection{Experimental Design}\label{sec:battery}

\paragraph{Corruption.} Every experiment uses the seed-0 checkpoint
of each variant and modifies nothing but the value reads at
evaluation time. For each head and each sequence, we compute the norm
of every value read $v_j$ and set a cutoff at the median or at the
25th percentile of those norms. The reads below the cutoff are the
head's \emph{quiet reads} for that sequence, and they are the target
because they are what a filter is designed to reject and where
interference is most plausible. Each quiet read is then replaced by
$v_j + \epsilon\, m_h\, u_j$, where $m_h$ is the median read norm of
that head in that sequence, $u_j$ is a unit noise vector, and the dose
$\epsilon$ takes the values 0.2, 0.4, 0.8, and 1.6. The corruption is
applied before any gate acts, so every variant sees the same
corrupted reads and differs only in what it does with them.

\paragraph{Junk types.} In the primary specification, which we call
\emph{structured junk}, $u_j$ is the normalized value read of the
same head at a uniformly random other position in the evaluation
batch, which almost always lies in another sequence. Such junk has
the statistical shape of a real read but no relation to the current
context, and it is far more damaging than noise without feature
structure. At 124M with the median cutoff, structured junk at
$\epsilon = 0.4$ costs the baseline 0.48 nats, whereas Gaussian junk
of the same norm costs it 0.032 nats (\cref{app:battery}).
Alternative specifications reported in \cref{app:battery} draw $u_j$
from the same sequence, from an isotropic Gaussian, from a random
fixed direction per head, and from the gate's own learned direction.

\paragraph{Evaluation.} Every evaluation uses the same 64 validation
sequences, drawn once with a fixed seed as eight batches of eight random
windows of the validation split, each window as long as the model's
training context, 512 tokens at 10M and 1{,}024 tokens at the other
tiers, for a total of 65{,}536 evaluated tokens at the larger tiers. The
loss is the mean next-token cross-entropy over every position of those
sequences. The outcome measure is that loss under corruption minus the
loss of the same model on the same sequences without corruption, so that
each entry isolates the effect of the injected junk.

\subsection{Results: Two Routes to Robustness}\label{sec:fingerprintcurves}

\Cref{tab:injectionmain} reports two kinds of measurement for every
variant at 124M and 350M. The four dose columns give the increase in
loss under structured junk at the 25th-percentile cutoff. The two mass
columns describe, before any corruption, where each variant's
attention goes. For every query, we take the attention weights over
the keys, including the phantom where one exists, so that they sum
to one, and record the total weight on the keys whose read norm falls
below the head's 25th percentile in that sequence, which is the share
of the query's attention exposed to the injection, and the weight on
the phantom. Both are averaged over all queries, heads, and layers on
the same eight validation batches, at seed 0 and the training context.
Three readings of the table follow.

\begin{table}[t]
  \caption{Injected interference at 124M and 350M, seed 0. Quiet is
  the share of a query's attention mass, phantom included, that lands
  on reads below the head's 25th-percentile norm in that sequence,
  which is the share exposed to the injection, and phantom is the share on
  the phantom, zero by construction for variants without one, each
  averaged over all queries, heads, and layers on eight validation
  batches at the training context. The
  dose columns give the mean increase in validation loss (nats) on the
  same batches under structured junk injected into those reads at dose $\epsilon$. The
  gate-only variants were not trained at 350M.}
  \label{tab:injectionmain}
  \begin{center}
    \begin{footnotesize}
      \setlength{\tabcolsep}{3pt}
      \begin{tabular}{lrrrrrr}
\toprule
 & \multicolumn{2}{c}{mass (\%)} & \multicolumn{4}{c}{loss increase at dose $\epsilon$} \\
\cmidrule(lr){2-3} \cmidrule(lr){4-7}
Variant & quiet & phantom & 0.2 & 0.4 & 0.8 & 1.6 \\
\midrule
\multicolumn{7}{l}{\emph{124M}} \\
baseline & $36$ & $0$ & $0.018$ & $0.110$ & $2.63$ & $4.95$ \\
sink logit & $20$ & $31$ & $0.005$ & $0.021$ & $0.13$ & $3.16$ \\
norm gate & $44$ & $0$ & $0.004$ & $0.042$ & $4.40$ & $6.82$ \\
projection gate & $22$ & $0$ & $0.006$ & $0.029$ & $0.26$ & $1.89$ \\
sink+norm & $30$ & $19$ & $0.004$ & $0.020$ & $0.51$ & $5.54$ \\
sink+proj & $19$ & $20$ & $0.004$ & $0.020$ & $0.18$ & $1.77$ \\
\midrule
\multicolumn{7}{l}{\emph{350M}} \\
baseline & $46$ & $0$ & $0.019$ & $0.119$ & $0.86$ & $5.92$ \\
sink logit & $13$ & $53$ & $0.004$ & $0.015$ & $0.10$ & $5.05$ \\
sink+norm & $27$ & $35$ & $0.002$ & $0.014$ & $0.65$ & $6.48$ \\
sink+proj & $17$ & $23$ & $0.003$ & $0.012$ & $0.05$ & $1.10$ \\
\bottomrule
\end{tabular}

    \end{footnotesize}
  \end{center}
  \vskip -0.1in
\end{table}

First, every variant with a sink logit or a gate loses a fraction of
the baseline's damage at low and moderate doses. At 124M and
$\epsilon = 0.4$, the baseline loses 0.110 nats, the sink logit
0.021, the norm gate 0.042, the projection gate 0.029, and the two
combined variants 0.020. The pattern is the same at 350M and at 50M
(\cref{app:battery}).

Second, a model can incur less loss in two ways, by reading less of the
corrupted material or by cleaning what it reads, and
\cref{tab:injectionmain} separates them. The sink logit supplies no filter. Its phantom holds
31\% of its attention mass, so its share on the corrupted reads falls
to 20\% from the baseline's 36\%, and that is the whole source of its
robustness. At 350M the same holds, with 13\% attention mass on quiet reads for the
sink logit against 46\% for the baseline. The norm gate is the mirror case. It places 44\% of its mass on the corrupted reads, more than the
baseline, and still incurs only a quarter to a half as much loss as the
baseline at low doses. This is indicative of the gate itself performing the filtering function. The projection gate and the combined
variants do some of each. The sink+proj variant is the most robust at
the highest dose at both tiers and within 0.05 nats of the best
variant at every other dose.

Third, the norm gate's row reverses at $\epsilon = 0.8$, where it
loses 4.40 nats against the baseline's 2.63, because a threshold on
norm passes any junk large enough to lift a read above it. The
sink+norm variant inherits this reversal at 350M at the highest dose.
The projection gate has the corresponding failure only for junk
aligned with its own gate direction, which a fixed-direction
injection in \cref{app:battery} shows.

In summary, the injection experiments support three conclusions.
Every variant with a sink logit or a gate incurs only a fraction of the
baseline's loss increase at low and moderate doses. That protection
arrives by two routes that the exposure measurements separate,
abstention, which reads less of the corrupted material, and filtering, which removes it from the value projection
pathway. The two routes combine to deliver the greatest reduction in
losses, so that sink+proj is the most robust variant at the
highest dose at both tiers, whereas the norm gate's reversal at high
doses marks the limit of a filter determined by norm alone.

% =====================================================================
\section{Discussion and Conclusion}\label{sec:discussion}

\paragraph{Relation to prior explanations.} Within the range we train,
the benefit a gate delivers shifts from almost entirely abstention at
the smallest tier to an equal share of filtering at the largest, so
studies that
measure gates at different scales can reasonably attribute the gain to
different causes. In these terms, added nonlinearity is a \emph{noise filter}. The other three proposed causes all concern \emph{abstention}: the elimination of attention sinks, the
mitigation of the extreme-token phenomena that accompany sinks
\citep{bu2025vga}, and the ability of a head to perform no operation. The query-computed output gate of \citet{qiu2025gated} can abstain per query, consistent with their observation that it removes sinks. The finding of \citet{bu2025vga} that a value gate outperforms a
learnable sink at 125M is consistent with the positive filtering
increment of the projection gate at 124M (\cref{tab:additivity}). 

\paragraph{Limitations.} First, due to limited computing resources, our loss measurements stop at
350M parameters, on one corpus, FineWeb-Edu, at modest token budgets.
This range brackets the scale at which value-state gated attention was
evaluated \citep{bu2025vga}. The filtering increment grows
across our tiers and matches the abstention increment at 350M, but
whether it
keeps growing, levels off, or gives way to something else at billions of parameters is unknown. The pretrained models of up to 20B
parameters in \cref{app:incidence} confirm the sink behaviour and the
injection damage, not the size of any loss increment. Second, the filtering
increment includes the gate's row scaling $Z_i$, counted as filtering
because it is determined by the reads, and a loss measurement cannot
separate interference removal from other uses of that scale.

\paragraph{Conclusion.} Softmax attention lacks two primitives,
abstention and noise filtering, and a learned gate on the value pathway
supplies part of each. Giving each primitive its own mechanism in
matched models of 10M to 350M parameters shows that the composition of
the gate's benefit changes with scale, from almost entirely abstention
at 10M to an equal share of filtering at 350M, and that the two
benefits are
largely additive with a small overlap.
Injected interference separates the two routes by which they protect a
model, abstention by reading less of the corrupted material and
filtering by removing it from the value projection pathway. A head with a sink logit for abstention \citep{openai2025gptoss} and a
projection gate for filtering \citep{bu2025vga} obtains both benefits, with the lowest mean loss at every tier we
train and the greatest robustness at high doses of injected
interference, at a cost of $H(d+2)$ parameters per layer and no change
to the key-value cache.

% ---------------------------------------------------------------------
\section*{Impact Statement}
This paper presents work whose goal is to advance the field of Machine
Learning. There are many potential societal consequences of our work,
none which we feel must be specifically highlighted here.

\bibliography{references}
\bibliographystyle{icml2026}

% =====================================================================
\newpage
\appendix
\crefalias{section}{appendix}
\onecolumn

\section{Training Details}\label{app:training}

\paragraph{Architecture.}
All models are decoder-only transformers in the GPT-2 style
\citep{radford2019gpt2}. Each block applies layer normalization before
attention and before the
feed-forward network (pre-norm). The feed-forward network has a hidden
width of four times the model width with a GELU nonlinearity, and no
linear layer carries a bias.
Positional information is a learned embedding, the token embedding
is tied to the output projection, and dropout is not used. Weights
are initialized from a normal distribution with standard deviation
0.02, except that the output projections of attention and of the
feed-forward network use a standard deviation of $0.02/\sqrt{2L}$
for $L$ layers. The vocabulary is the GPT-2 byte-pair encoding with
50{,}257 tokens.

\paragraph{Data.}
The four tiers are trained on FineWeb-Edu
\citep{penedo2024fineweb}, taken from the \texttt{sample-10BT}
subset, tokenized with the GPT-2 tokenizer, and concatenated with an
end-of-text token between documents. A validation set of 5.2M tokens
is taken first, so that it is disjoint from every training set. The
training sets contain 950M tokens for the 10M and 50M tiers, 1.6B
tokens for the 124M tier, and 6.0B tokens for the 350M tier. Training
batches are random windows of the context length drawn with
replacement from the training stream, so no tier sees its training
set more than once in expectation.

\paragraph{Optimization.}
Every tier uses AdamW with $\beta_1 = 0.9$, $\beta_2 = 0.95$, a peak
learning rate of $6 \times 10^{-4}$, a linear warmup, and a cosine
decay to $6 \times 10^{-5}$ at the final iteration. Weight decay of
0.1 is applied to every parameter tensor of dimension two or more,
which includes the embedding, and to no others. In particular, the
norm-gate threshold $\tau_h$, the projection-gate
direction $w_h$ and bias $b_h$, and the sink logit $b_h$ are excluded
from weight decay. Decay would pull each of them toward zero, and zero has a mechanistic
meaning in each case, no abstention for the sink logit and uniform
attenuation for the gates. Gradients are clipped to a
global norm of 1.
Training runs under bfloat16 autocast on the matrix multiplications,
and the gate computations, namely the read norms and the sigmoids, are performed in single
precision. \Cref{tab:hyper} gives the per-tier settings. The
validation loss is estimated on a fixed set of batches drawn with a
fixed seed, sixteen batches during training and sixty-four for the
final value, so that every variant and every seed is evaluated on the
same tokens.

\begin{table}[h]
  \caption{Per-tier training configuration. Batch is the number of
  sequences per optimizer step, written as micro-batch size times
  accumulation steps.}
  \label{tab:hyper}
  \begin{center}
    \begin{small}
      \begin{tabular}{lrrrrrrrr}
        \toprule
        Tier & Layers & Heads & Width & Context & Batch & Iterations &
        Warmup & Tokens \\
        \midrule
        10M  & 6  & 6  & 384  & 512  & $64 \times 1$ & 6{,}000   & 200
        & 0.20B \\
        50M  & 10 & 10 & 640  & 1024 & $16 \times 2$ & 29{,}000  & 500
        & 0.95B \\
        124M & 12 & 12 & 768  & 1024 & $8 \times 4$  & 48{,}800  & 800
        & 1.60B \\
        350M & 24 & 16 & 1024 & 1024 & $8 \times 4$  & 177{,}000 &
        2{,}000 & 5.80B \\
        \bottomrule
      \end{tabular}
    \end{small}
  \end{center}
\end{table}

\paragraph{Seeds and pairing.}
The seed sets the parameter initialization and the order of training
batches, and it is shared across variants. Two variants with the same
seed therefore see the same tokens in the same order and differ only in
their attention mechanism. This is what makes the paired per-seed
differences in the main text meaningful. The three sink+proj seeds at
10M were trained after the other 10M variants with a micro-batch of 16
and four accumulation steps rather than a single batch of 64. Because
the batch sampler draws indices from the same generator in the
same order, this produces the identical sequence of training batches and
the identical evaluation batches. We verified that it does.

\paragraph{Implementation of the variants.}
The phantom key of the off-by-one, sink-logit, and combined variants is
implemented as an extra key and value column at position zero, with the value fixed at zero and the column visible to every query. The
column is kept visible either by an explicit attention mask or,
equivalently, by prepending a dummy query row and using the fused causal
kernel, and the two compute the same attention. For the off-by-one variant the phantom key is the zero
vector, so its logit is zero. For the variants with a learnable sink
logit, the head dimension of the queries and keys is augmented. The
query receives a constant coordinate equal to $\sqrt{d}$, the
phantom key receives the sink logit $b_h$ in that coordinate, and every
real key receives zero. With the attention scale set to $1/\sqrt{d}$
explicitly, the phantom's logit is $b_h$ and every other logit
is unchanged. The augmented dimension is zero-padded to a multiple of eight so that
fused attention kernels accept it. The sink-token variant adds a
learned extra token at the start of every input sequence and removes
the corresponding output position before the language-model head.

\paragraph{The variants.}
\Cref{tab:variants} summarizes the eight variants of \cref{sec:armlist}
with the parameter cost of each.

\begin{table}[h]
  \caption{The eight variants. Parameters are counted per layer for $H$
  heads of dimension $d$. Every variant is compatible with the key-value
  cache, because each gate is a function of the read it multiplies.
  The gates have no routing-side abstention. Their row scaling $Z_i$
  (Lemma~\ref{lem:scalar}) is noted because it can shrink a row's output.}
  \label{tab:variants}
  \begin{center}
    \begin{small}
      \begin{tabular}{lllr}
        \toprule
        Variant & Abstention & Filter & Params \\
        \midrule
        baseline   & none              & none                  & 0 \\
        offbyone   & phantom, fixed    & none                  & 0 \\
        sinklogit  & phantom, learned  & none                  & $H$ \\
        sinktoken  & token, approximate& none                  &
        $d_{\text{model}}$ \\
        normgate   & none; $Z_i$ scaling & norm threshold        & $H$ \\
        projgate   & none; $Z_i$ scaling & learned direction     & $H(d+1)$ \\
        sink+norm  & phantom, learned  & norm threshold        & $2H$ \\
        sink+proj  & phantom, learned  & learned direction     & $H(d+2)$ \\
        \bottomrule
      \end{tabular}
    \end{small}
  \end{center}
\end{table}

\paragraph{Compute.}
All training was carried out on a single NVIDIA GeForce RTX 5090
with PyTorch 2.11. A 10M run takes about ten minutes, and a 350M run
takes between 29 and 47 hours depending on the variant and on the
attention kernel used. The total compute of the study is dominated by
the 350M runs and the three seeds at each of the 50M and 124M
tiers.

\section{Full Variant Tables}\label{app:tables}

\begin{table*}[t]
  \caption{Final validation loss (nats) across four model sizes. Each
  improvement is the mean of per-seed paired differences with its
  standard error, and negative is better. The last block measures what each gate adds once a sink logit is in
  place. Variants absent at a tier are marked with a dash.}
  \label{tab:loss}
  \begin{center}
    \begin{small}
      \begin{tabular}{lrrrr}
\toprule
Variant & 10M & 50M & 124M & 350M \\
\midrule
\multicolumn{5}{l}{\emph{Baseline loss}} \\
baseline & $4.2119$ & $3.5399$ & $3.3710$ & $3.0293$ \\
\midrule
\multicolumn{5}{l}{\emph{Improvement over baseline (paired per seed)}} \\
off-by-one & $-0.0160 \pm 0.0027$ & -- & -- & -- \\
sink token & $-0.0025 \pm 0.0041$ & -- & -- & -- \\
sink logit & $-0.0185 \pm 0.0033$ & $-0.0068 \pm 0.0003$ & $-0.0052 \pm 0.0025$ & $-0.0103 \pm 0.0084$ \\
norm gate & $-0.0037 \pm 0.0026$ & $-0.0069 \pm 0.0008$ & $-0.0059 \pm 0.0012$ & -- \\
projection gate & $-0.0113 \pm 0.0029$ & $-0.0094 \pm 0.0011$ & $-0.0091 \pm 0.0011$ & -- \\
sink+norm & $-0.0177 \pm 0.0037$ & $-0.0106 \pm 0.0015$ & $-0.0087 \pm 0.0021$ & $-0.0187 \pm 0.0060$ \\
sink+proj & $-0.0221 \pm 0.0029$ & $-0.0133 \pm 0.0018$ & $-0.0106 \pm 0.0022$ & $-0.0193 \pm 0.0057$ \\
\midrule
\multicolumn{5}{l}{\emph{Improvement over sink logit (filtering with abstention held fixed)}} \\
norm gate & $+0.0148 \pm 0.0016$ & $-0.0001 \pm 0.0007$ & $-0.0008 \pm 0.0018$ & -- \\
projection gate & $+0.0072 \pm 0.0053$ & $-0.0026 \pm 0.0010$ & $-0.0039 \pm 0.0030$ & -- \\
sink+norm & $+0.0008 \pm 0.0013$ & $-0.0030 \pm 0.0009$ & $-0.0036 \pm 0.0006$ & $-0.0084 \pm 0.0039$ \\
sink+proj & $-0.0036 \pm 0.0008$ & $-0.0065 \pm 0.0016$ & $-0.0054 \pm 0.0012$ & $-0.0089 \pm 0.0027$ \\
\bottomrule
\end{tabular}

    \end{small}
  \end{center}
  \vskip -0.1in
\end{table*}

\Cref{tab:loss} reports the paired improvements at every tier, and
\cref{tab:allseeds} lists the final validation loss of every run in
the study. The sink token was run only at 10M, where its improvement over the
baseline is within seed noise (\cref{tab:loss}).

\begin{table}[h]
  \caption{Final validation loss (nats) of every run. Dashes mark
  seeds that were not trained.}
  \label{tab:allseeds}
  \begin{center}
    \begin{small}
      \begin{tabular}{llrrrrrr}
\toprule
Tier & Variant & seed 0 & seed 1 & seed 2 & seed 3 & seed 4 & mean \\
\midrule
FineWeb 10M & baseline & $4.2141$ & $4.2060$ & $4.2155$ & -- & -- & $4.2119$ \\
 & off-by-one & $4.1988$ & $4.1944$ & $4.1945$ & -- & -- & $4.1959$ \\
 & sink token & $4.2037$ & $4.2061$ & $4.2184$ & -- & -- & $4.2094$ \\
 & sink logit & $4.1981$ & $4.1916$ & $4.1904$ & -- & -- & $4.1934$ \\
 & norm gate & $4.2099$ & $4.2071$ & $4.2076$ & -- & -- & $4.2082$ \\
 & projection gate & $4.2070$ & $4.1890$ & $4.2057$ & -- & -- & $4.2006$ \\
 & sink+norm & $4.2014$ & $4.1906$ & $4.1905$ & -- & -- & $4.1942$ \\
 & sink+proj & $4.1952$ & $4.1864$ & $4.1876$ & -- & -- & $4.1897$ \\
\midrule
50M & baseline & $3.5417$ & $3.5399$ & $3.5381$ & -- & -- & $3.5399$ \\
 & sink logit & $3.5347$ & $3.5338$ & $3.5309$ & $3.5285$ & $3.5279$ & $3.5311$ \\
 & norm gate & $3.5359$ & $3.5337$ & $3.5295$ & -- & -- & $3.5330$ \\
 & projection gate & $3.5301$ & $3.5319$ & $3.5295$ & -- & -- & $3.5305$ \\
 & sink+norm & $3.5282$ & $3.5314$ & $3.5282$ & $3.5264$ & $3.5264$ & $3.5281$ \\
 & sink+proj & $3.5249$ & $3.5287$ & $3.5261$ & -- & -- & $3.5266$ \\
\midrule
124M & baseline & $3.3747$ & $3.3681$ & $3.3700$ & -- & -- & $3.3710$ \\
 & sink logit & $3.3646$ & $3.3663$ & $3.3664$ & -- & -- & $3.3658$ \\
 & norm gate & $3.3666$ & $3.3623$ & $3.3661$ & -- & -- & $3.3650$ \\
 & projection gate & $3.3658$ & $3.3571$ & $3.3627$ & -- & -- & $3.3619$ \\
 & sink+norm & $3.3618$ & $3.3615$ & $3.3633$ & -- & -- & $3.3622$ \\
 & sink+proj & $3.3600$ & $3.3585$ & $3.3626$ & -- & -- & $3.3604$ \\
\midrule
350M & baseline & $3.0338$ & $3.0284$ & $3.0257$ & -- & -- & $3.0293$ \\
 & sink logit & $3.0325$ & $3.0013$ & $3.0230$ & -- & -- & $3.0190$ \\
 & sink+norm & $3.0251$ & $2.9991$ & $3.0075$ & -- & -- & $3.0106$ \\
 & sink+proj & $3.0210$ & $2.9979$ & $3.0112$ & -- & -- & $3.0100$ \\
\bottomrule
\end{tabular}

    \end{small}
  \end{center}
\end{table}

\paragraph{Seed variance.} Two observations from \cref{tab:allseeds}
bear on the reading of the main-text tables. At 350M the three seeds of the sink logit give paired improvements
of 0.0013, 0.0271, and 0.0026 nats over the baseline, so its 350M
entries in \cref{tab:loss} and \cref{tab:lossmain} carry a standard
error close to their mean, whereas the two combined variants improve
on the sink logit in every seed. The 10M
projection-gate variant has a
seed standard deviation of 0.010, three times that of any other variant
at that tier, so its 10M entries in \cref{tab:loss} carry that
variance.

\paragraph{Routing-slot control.}
Lemma~\ref{lem:scalar} separates a value gate into a gate in the routing
slot and the row scaling $Z_i$. To measure whether the redistribution
of weight among the reads carries any benefit on its own, we trained
the norm gate in the routing slot, with routing weights
$\mathrm{softmax}_j(s_{ij} + \log g_j)$ and no factor $Z_i$, with
three seeds at 10M, 50M, and 124M. The term $\log g_j$ is folded into
one extra key dimension that is paired with a constant query entry, so
that the fused attention kernel computes the modified logits.
\Cref{tab:routing} reports the paired improvement over the baseline,
the mean gate value over the reads of the last validation batch at
the end of training, averaged over layers, and the fraction of those
reads with a gate value below one half. The gate opens
almost fully at every tier and the improvement is within seed noise.
The redistribution therefore carries no measurable benefit on its
own, and the benefit of the same gate in the value slot, 0.0037 to
0.0069 nats at these tiers (\cref{tab:lossmain}), arrives together
with the row scaling $Z_i$. A gate that must keep the routing mass at
one cannot remove a read, only move its weight to other reads. These
results suggest that removal is what a value gate is used for.

\begin{table}[h]
  \caption{The norm gate in the routing slot. Improvement is the mean
  of per-seed paired differences from the baseline with its standard
  error, and positive is better. The gate value and the fraction of
  reads below one half are averaged over the three seeds.}
  \label{tab:routing}
  \begin{center}
    \begin{small}
      \begin{tabular}{lrrr}
\toprule
Tier & improvement over baseline & mean gate value & reads below $0.5$ \\
\midrule
10M & $-0.0017 \pm 0.0014$ & $0.999$ & $0.0\%$ \\
50M & $+0.0017 \pm 0.0002$ & $0.999$ & $0.0\%$ \\
124M & $-0.0001 \pm 0.0005$ & $0.987$ & $0.7\%$ \\
\bottomrule
\end{tabular}

    \end{small}
  \end{center}
\end{table}

\section{Injection Experiment Details}\label{app:battery}

\paragraph{Procedure.}
The corruption, the dose scaling, and the evaluation set are as
described in \cref{sec:battery}, and every experiment uses the seed-0
checkpoint of each variant.

\paragraph{Junk types.}
In the structured condition, $u_j$ is a normalized value read taken from
a uniformly random other position in the batch, which almost always lies
in another sequence, so that the junk has the statistical shape of a real read but no
relation to the current context. In the same-context condition the
read is taken from a random position of the same sequence. In the
Gaussian condition $u_j$ is an isotropic normal vector normalized to
unit length. In the random-direction condition $u_j$ is a single
unit vector per head drawn at random and shared by every corrupted
read in the batch. In the own-direction condition $u_j$ is the
head's learned gate direction $w_h / \norm{w_h}$, which is defined
only for the projection-gate variants. \Cref{tab:batterya,tab:batteryb}
list every condition at every tier.

\begin{figure*}[t]
  \begin{center}
    \centerline{\includegraphics[width=\textwidth]{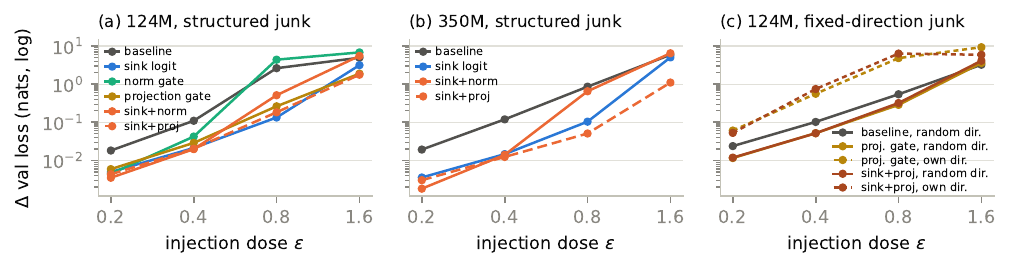}}
    \caption{Robustness to injected junk.
    (a,b) Increase in validation loss (log scale) under structured
    junk injected into the quietest quarter of each head's reads, at
    124M and 350M. Every variant with a sink logit or a gate is several
    times more robust than the baseline at low and moderate doses. At
    high doses the norm gate, and the sink+norm variant that contains it,
    become worse than the baseline, whereas the projection gate does
    not. (c) At 124M, junk along a random fixed direction hurts the
    projection gate no more than the baseline, but junk along the
    gate's own learned direction collapses it, and collapses sink+proj
    further.}
    \label{fig:battery}
  \end{center}
\end{figure*}

\begin{table}[h]
  \caption{Full injection battery at the 10M and 50M tiers. Entries
  are the increase in validation loss (nats) over the uncorrupted
  model.}
  \label{tab:batterya}
  \begin{center}
    \begin{footnotesize}
      \begin{tabular}{lllrrrr}
\toprule
Tier & Junk type (cutoff) & Variant & $\epsilon=0.2$ & $\epsilon=0.4$ & $\epsilon=0.8$ & $\epsilon=1.6$ \\
\midrule
FineWeb 10M & structured (other context) (q25) & baseline & $+0.032$ & $+0.138$ & $+0.771$ & $+2.695$ \\
 &  & sink logit & $+0.009$ & $+0.041$ & $+0.178$ & $+0.778$ \\
 &  & norm gate & $+0.027$ & $+0.124$ & $+0.877$ & $+3.162$ \\
 &  & projection gate & $+0.026$ & $+0.100$ & $+0.408$ & $+1.368$ \\
 &  & sink+norm & $+0.010$ & $+0.047$ & $+0.252$ & $+1.685$ \\
 &  & sink+proj & $+0.007$ & $+0.030$ & $+0.133$ & $+0.594$ \\
\midrule
FineWeb 10M & structured (other context) (median) & baseline & $+0.069$ & $+0.310$ & $+1.948$ & $+3.239$ \\
 &  & sink logit & $+0.019$ & $+0.093$ & $+0.474$ & $+2.485$ \\
 &  & norm gate & $+0.063$ & $+0.310$ & $+2.122$ & $+3.681$ \\
 &  & projection gate & $+0.060$ & $+0.258$ & $+1.013$ & $+2.962$ \\
 &  & sink+norm & $+0.023$ & $+0.117$ & $+0.695$ & $+3.104$ \\
 &  & sink+proj & $+0.017$ & $+0.077$ & $+0.373$ & $+1.845$ \\
\midrule
FineWeb 10M & Gaussian (median) & baseline & $+0.009$ & $+0.034$ & $+0.142$ & $+0.589$ \\
 &  & sink logit & $+0.004$ & $+0.016$ & $+0.062$ & $+0.264$ \\
 &  & norm gate & $+0.005$ & $+0.026$ & $+0.196$ & $+0.940$ \\
 &  & projection gate & $+0.003$ & $+0.015$ & $+0.074$ & $+0.346$ \\
 &  & sink+norm & $+0.004$ & $+0.017$ & $+0.077$ & $+0.429$ \\
 &  & sink+proj & $+0.003$ & $+0.012$ & $+0.051$ & $+0.224$ \\
\midrule
50M & structured (other context) (q25) & baseline & $+0.018$ & $+0.103$ & $+1.296$ & $+4.688$ \\
 &  & sink logit & $+0.007$ & $+0.032$ & $+0.182$ & $+3.124$ \\
 &  & norm gate & $+0.008$ & $+0.070$ & $+3.242$ & $+5.225$ \\
 &  & projection gate & $+0.007$ & $+0.033$ & $+0.199$ & $+1.684$ \\
 &  & sink+norm & $+0.004$ & $+0.024$ & $+0.371$ & $+5.997$ \\
 &  & sink+proj & $+0.005$ & $+0.021$ & $+0.096$ & $+0.944$ \\
\midrule
50M & structured (other context) (median) & baseline & $+0.034$ & $+0.261$ & $+3.389$ & $+4.920$ \\
 &  & sink logit & $+0.014$ & $+0.070$ & $+1.435$ & $+4.110$ \\
 &  & norm gate & $+0.024$ & $+0.259$ & $+5.358$ & $+5.458$ \\
 &  & projection gate & $+0.016$ & $+0.090$ & $+0.968$ & $+4.396$ \\
 &  & sink+norm & $+0.011$ & $+0.068$ & $+2.564$ & $+6.411$ \\
 &  & sink+proj & $+0.012$ & $+0.056$ & $+0.461$ & $+3.748$ \\
\midrule
50M & Gaussian (median) & baseline & $+0.009$ & $+0.033$ & $+0.137$ & $+0.626$ \\
 &  & sink logit & $+0.004$ & $+0.013$ & $+0.054$ & $+0.259$ \\
 &  & norm gate & $+0.002$ & $+0.015$ & $+0.312$ & $+4.345$ \\
 &  & projection gate & $+0.004$ & $+0.013$ & $+0.052$ & $+0.272$ \\
 &  & sink+norm & $+0.003$ & $+0.013$ & $+0.120$ & $+2.861$ \\
 &  & sink+proj & $+0.002$ & $+0.011$ & $+0.046$ & $+0.241$ \\
\midrule
50M & random direction (median) & baseline & $+0.024$ & $+0.105$ & $+0.542$ & $+2.842$ \\
 &  & sink logit & $+0.016$ & $+0.066$ & $+0.350$ & $+2.734$ \\
 &  & norm gate & $+0.009$ & $+0.056$ & $+1.216$ & $+5.568$ \\
 &  & projection gate & $+0.013$ & $+0.055$ & $+0.285$ & $+2.368$ \\
 &  & sink+norm & $+0.010$ & $+0.053$ & $+0.843$ & $+5.962$ \\
 &  & sink+proj & $+0.013$ & $+0.055$ & $+0.297$ & $+2.838$ \\
\midrule
50M & structured (same context) (median) & baseline & $+0.031$ & $+0.257$ & $+3.330$ & $+4.901$ \\
 &  & sink logit & $+0.012$ & $+0.071$ & $+1.692$ & $+4.281$ \\
 &  & norm gate & $+0.024$ & $+0.262$ & $+5.316$ & $+5.477$ \\
 &  & projection gate & $+0.014$ & $+0.089$ & $+0.973$ & $+4.434$ \\
 &  & sink+norm & $+0.009$ & $+0.069$ & $+2.845$ & $+6.482$ \\
 &  & sink+proj & $+0.007$ & $+0.051$ & $+0.495$ & $+3.733$ \\
\midrule
50M & own gate direction (median) & projection gate & $+0.059$ & $+0.540$ & $+3.611$ & $+11.709$ \\
 &  & sink+proj & $+0.112$ & $+1.041$ & $+11.469$ & $+13.258$ \\
\bottomrule
\end{tabular}

    \end{footnotesize}
  \end{center}
\end{table}

\begin{table}[h]
  \caption{Full injection battery at the 124M and 350M tiers.}
  \label{tab:batteryb}
  \begin{center}
    \begin{footnotesize}
      \begin{tabular}{lllrrrr}
\toprule
Tier & Junk type (cutoff) & Variant & $\epsilon=0.2$ & $\epsilon=0.4$ & $\epsilon=0.8$ & $\epsilon=1.6$ \\
\midrule
124M & structured (other context) (q25) & baseline & $+0.018$ & $+0.110$ & $+2.632$ & $+4.948$ \\
 &  & sink logit & $+0.005$ & $+0.021$ & $+0.134$ & $+3.155$ \\
 &  & norm gate & $+0.004$ & $+0.042$ & $+4.403$ & $+6.824$ \\
 &  & projection gate & $+0.006$ & $+0.029$ & $+0.263$ & $+1.891$ \\
 &  & sink+norm & $+0.004$ & $+0.020$ & $+0.513$ & $+5.544$ \\
 &  & sink+proj & $+0.004$ & $+0.020$ & $+0.183$ & $+1.774$ \\
\midrule
124M & structured (other context) (median) & baseline & $+0.035$ & $+0.479$ & $+4.163$ & $+5.417$ \\
 &  & sink logit & $+0.010$ & $+0.056$ & $+1.545$ & $+5.107$ \\
 &  & norm gate & $+0.016$ & $+0.238$ & $+5.114$ & $+7.328$ \\
 &  & projection gate & $+0.012$ & $+0.096$ & $+1.333$ & $+3.825$ \\
 &  & sink+norm & $+0.009$ & $+0.069$ & $+4.773$ & $+5.835$ \\
 &  & sink+proj & $+0.010$ & $+0.046$ & $+0.843$ & $+5.294$ \\
\midrule
124M & Gaussian (median) & baseline & $+0.008$ & $+0.032$ & $+0.140$ & $+0.692$ \\
 &  & sink logit & $+0.004$ & $+0.013$ & $+0.053$ & $+0.256$ \\
 &  & norm gate & $+0.002$ & $+0.014$ & $+0.297$ & $+4.840$ \\
 &  & projection gate & $+0.003$ & $+0.012$ & $+0.049$ & $+0.268$ \\
 &  & sink+norm & $+0.002$ & $+0.011$ & $+0.161$ & $+3.900$ \\
 &  & sink+proj & $+0.002$ & $+0.010$ & $+0.045$ & $+0.249$ \\
\midrule
124M & random direction (median) & baseline & $+0.024$ & $+0.102$ & $+0.544$ & $+3.240$ \\
 &  & sink logit & $+0.013$ & $+0.062$ & $+0.367$ & $+3.463$ \\
 &  & norm gate & $+0.007$ & $+0.050$ & $+1.433$ & $+6.211$ \\
 &  & projection gate & $+0.012$ & $+0.052$ & $+0.287$ & $+3.608$ \\
 &  & sink+norm & $+0.007$ & $+0.044$ & $+0.910$ & $+5.773$ \\
 &  & sink+proj & $+0.012$ & $+0.052$ & $+0.320$ & $+4.127$ \\
\midrule
124M & structured (same context) (median) & baseline & $+0.033$ & $+0.482$ & $+4.170$ & $+5.337$ \\
 &  & sink logit & $+0.009$ & $+0.054$ & $+1.543$ & $+5.135$ \\
 &  & norm gate & $+0.014$ & $+0.230$ & $+5.165$ & $+7.354$ \\
 &  & projection gate & $+0.012$ & $+0.087$ & $+1.319$ & $+3.766$ \\
 &  & sink+norm & $+0.008$ & $+0.074$ & $+4.568$ & $+6.006$ \\
 &  & sink+proj & $+0.008$ & $+0.047$ & $+0.924$ & $+5.191$ \\
\midrule
124M & own gate direction (median) & projection gate & $+0.061$ & $+0.562$ & $+4.831$ & $+9.331$ \\
 &  & sink+proj & $+0.053$ & $+0.746$ & $+6.361$ & $+5.919$ \\
\midrule
350M & structured (other context) (q25) & baseline & $+0.019$ & $+0.119$ & $+0.861$ & $+5.916$ \\
 &  & sink logit & $+0.004$ & $+0.015$ & $+0.104$ & $+5.047$ \\
 &  & sink+norm & $+0.002$ & $+0.014$ & $+0.653$ & $+6.478$ \\
 &  & sink+proj & $+0.003$ & $+0.012$ & $+0.051$ & $+1.104$ \\
\bottomrule
\end{tabular}

    \end{footnotesize}
  \end{center}
\end{table}

\paragraph{Exposure and downstream tables.}
\Cref{tab:exposure} gives the exposure measurements discussed in
\cref{sec:fingerprintcurves}, and \cref{tab:downstream} the zero-shot
evaluation summarized in \cref{sec:setup4}.

\begin{table}[h]
  \caption{Exposure to injected junk. Share of each attention row's total mass
  placed on quiet reads (below the per-head median or 25th percentile
  norm) and on the phantom, averaged over heads and rows at seed 0 and
  measured at the training context, as in the injection experiments. Mass on the phantom is mass not exposed
  to corruption.}
  \label{tab:exposure}
  \begin{center}
    \begin{small}
      \begin{tabular}{lrrrrrrrrr}
\toprule
 & \multicolumn{3}{c}{50M} & \multicolumn{3}{c}{124M} & \multicolumn{3}{c}{350M} \\
\cmidrule(lr){2-4} \cmidrule(lr){5-7} \cmidrule(lr){8-10}
Variant & median & q25 & phantom & median & q25 & phantom & median & q25 & phantom \\
\midrule
baseline & $0.551$ & $0.349$ & -- & $0.563$ & $0.365$ & -- & $0.632$ & $0.458$ & -- \\
sink logit & $0.361$ & $0.215$ & $0.273$ & $0.337$ & $0.197$ & $0.311$ & $0.224$ & $0.132$ & $0.530$ \\
norm gate & $0.604$ & $0.418$ & -- & $0.619$ & $0.443$ & -- & -- & -- & -- \\
projection gate & $0.431$ & $0.232$ & -- & $0.419$ & $0.223$ & -- & -- & -- & -- \\
sink+norm & $0.458$ & $0.299$ & $0.156$ & $0.451$ & $0.298$ & $0.186$ & $0.384$ & $0.267$ & $0.346$ \\
sink+proj & $0.378$ & $0.201$ & $0.168$ & $0.360$ & $0.194$ & $0.201$ & $0.321$ & $0.166$ & $0.232$ \\
\bottomrule
\end{tabular}

    \end{small}
  \end{center}
\end{table}

\begin{table}[h]
  \caption{Zero-shot evaluation of the 124M models, mean and standard
  deviation over three seeds, each model evaluated at its 1{,}024-token
  context with the GPT-2 tokenizer. LAMBADA is last-word accuracy on the
  5{,}153 passages of the OpenAI variant of the test set: the final word
  of each passage is the target, and a passage counts as correct only if
  every token of the target is the greedy prediction at its position.
  HellaSwag is accuracy on the first 3{,}000 validation examples,
  choosing among the four endings by the mean per-token log-probability
  of the ending given the context. WikiText-2 is perplexity on the raw
  validation split, concatenated and cut into non-overlapping
  1{,}024-token windows, computed as the exponential of the mean
  next-token loss.}
  \label{tab:downstream}
  \begin{center}
    \begin{small}
      \begin{tabular}{lrrr}
\toprule
Variant & LAMBADA acc & HellaSwag acc & WikiText-2 ppl \\
\midrule
baseline & $0.182 \pm 0.006$ & $0.323 \pm 0.002$ & $58.1 \pm 0.5$ \\
sink logit & $0.179 \pm 0.004$ & $0.325 \pm 0.003$ & $57.0 \pm 0.5$ \\
norm gate & $0.185 \pm 0.001$ & $0.323 \pm 0.002$ & $57.3 \pm 0.2$ \\
projection gate & $0.186 \pm 0.002$ & $0.325 \pm 0.003$ & $57.3 \pm 0.8$ \\
sink+norm & $0.186 \pm 0.003$ & $0.323 \pm 0.002$ & $56.8 \pm 0.1$ \\
sink+proj & $0.187 \pm 0.007$ & $0.326 \pm 0.003$ & $56.6 \pm 0.3$ \\
\bottomrule
\end{tabular}

    \end{small}
  \end{center}
\end{table}

\section{Synthetic Superposition Task}\label{sec:habitat}

A synthetic task places a single attention head under controlled
superposition load, with features superposed in the value space at loads
from one to eight features per dimension. Routing in this task is
channel-blind by construction, meaning that keys carry only tags and the
query cannot see the content of any read. \Cref{fig:toy} shows three
things. The baseline's error grows rapidly with load. Off-by-one
abstention does not help. The norm gate approaches the performance of an
oracle trained with knowledge of the interference. A condition in which
routing can see content removes the gate's advantage.

\paragraph{Construction.} A dictionary of $N$ unit
feature vectors in $\R^d$ with $d = 64$ is drawn at random. When $N \le
d$ the dictionary is orthonormalized so that features do not
interfere, which serves as the no-interference control. When $N > d$ the
features are in superposition at load $N/d$, which we vary over 1, 2, 4,
and 8. The first $n_{\text{rel}} = 16$ features
are designated relevant. Each sequence contains $T = 64$ items, and
each item is a relevant feature with probability 0.15 and an
irrelevant feature otherwise. A single learned query attends over the
items through one head with no
feed-forward network and no layer normalization. A linear readout of the
head's output must report which relevant features were present, trained
with a multi-label binary cross-entropy loss. Models are trained for
4{,}000 steps with
batches of 256 sequences at a learning rate of $10^{-3}$, with three
seeds per configuration.

The keys are the design variable. In the channel-blind condition used in
\cref{fig:toy}, each item's key
is a random tag vector that carries no information about the item's
content, so routing cannot distinguish relevant from irrelevant items
and must spread mass over both. The value pathway is then the only place
where interference from irrelevant items can be removed. The variants
are the baseline,
off-by-one abstention, and the norm gate of \cref{sec:armlist}, each with the gate sharpness $\beta = 8$ of the main experiments. The
oracle is the baseline architecture trained and evaluated with every
irrelevant item's read set to zero by ground truth, which gives the
error attainable if filtering were perfect. In the routing-visible
condition the keys are
the items themselves, so that a learned key direction can separate
relevant from irrelevant items. In that condition routing places
almost all of its mass on relevant items, the oracle and the baseline
coincide, and no variant has any filtering headroom. The task therefore has a filtering signature only when routing is blind
to content.

\begin{figure}[h]
  \begin{center}
    \centerline{\includegraphics[width=0.5\textwidth]{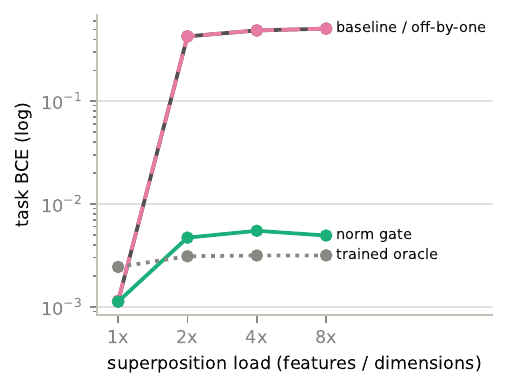}}
    \caption{The channel-blind superposition task. Task error against
    superposition load for the baseline, off-by-one abstention, the
    norm gate, and a trained oracle. The norm gate approaches the oracle's
    error, whereas off-by-one abstention does not help.}
    \label{fig:toy}
  \end{center}
\end{figure}

\section{Pretrained Models}\label{app:incidence}

\paragraph{Injection in pretrained models.}
The injection experiment of \cref{sec:fingerprint} can be run on any
pretrained model, because it modifies only the value reads at
evaluation time. \Cref{tab:pythiainject} reports it for six Pythia models from 160M to
2.8B parameters \citep{biderman2023pythia}, on WikiText-2 blocks of
1{,}024 tokens \citep{merity2017wikitext}, with
the quietest quarter of each head's reads corrupted and the junk
normalized to the head's median read norm as in the main experiments.
Two features of our results replicate at every size from 160M to
2.8B. Structured junk transplanted from another context costs one and a half
to three times as much as Gaussian noise of the same norm at doses up to
0.8. The two converge only at the highest dose, where every condition is
destructive. Junk transplanted from the same
sequence costs as much as or more than junk from another context.
The absolute damage at a given dose is larger than in our models. The
comparison the replication supports is between junk types within a
model.

\begin{table}[h]
  \caption{Injection replication on pretrained Pythia models. Increase
  in WikiText-2 NLL (nats) over the clean model when the quietest
  quarter of each head's reads receives junk of the head's median read
  norm scaled by $\epsilon$, transplanted from another context,
  transplanted from the same sequence, or drawn as Gaussian noise.}
  \label{tab:pythiainject}
  \begin{center}
    \begin{small}
      \begin{tabular}{llrrrr}
\toprule
Model & Junk & $\epsilon=0.2$ & $\epsilon=0.4$ & $\epsilon=0.8$ & $\epsilon=1.6$ \\
\midrule
Pythia-160M & other context & $+0.13$ & $+0.52$ & $+2.24$ & $+5.35$ \\
 & same context & $+0.12$ & $+0.51$ & $+2.42$ & $+5.43$ \\
 & Gaussian & $+0.08$ & $+0.25$ & $+1.18$ & $+4.78$ \\
\midrule
Pythia-410M & other context & $+0.15$ & $+0.94$ & $+5.47$ & $+7.89$ \\
 & same context & $+0.16$ & $+1.09$ & $+6.08$ & $+7.79$ \\
 & Gaussian & $+0.09$ & $+0.43$ & $+2.31$ & $+6.72$ \\
\midrule
Pythia-1B & other context & $+0.11$ & $+0.71$ & $+3.59$ & $+8.99$ \\
 & same context & $+0.11$ & $+0.77$ & $+4.19$ & $+9.77$ \\
 & Gaussian & $+0.07$ & $+0.33$ & $+1.63$ & $+5.46$ \\
\midrule
Pythia-1.4B & other context & $+0.11$ & $+0.79$ & $+4.28$ & $+6.93$ \\
 & same context & $+0.12$ & $+0.79$ & $+4.50$ & $+7.33$ \\
 & Gaussian & $+0.08$ & $+0.41$ & $+2.58$ & $+7.22$ \\
\midrule
Pythia-2.8B & other context & $+0.11$ & $+0.78$ & $+4.85$ & $+7.86$ \\
 & same context & $+0.11$ & $+0.80$ & $+4.78$ & $+8.22$ \\
 & Gaussian & $+0.05$ & $+0.26$ & $+2.02$ & $+7.43$ \\
\bottomrule
\end{tabular}

    \end{small}
  \end{center}
\end{table}

\paragraph{A production model with a sink logit.}
By the account of \cref{sec:autopsy}, a model trained with a learned
per-head sink logit has no need to construct a sink of its own. It
should therefore place a large share of its attention mass on the
phantom, park little on the first key position, and keep an ordinary
value read at that position. We check this on
gpt-oss-20b \citep{openai2025gptoss}, whose released weights carry
the sink logit of every head, on eight WikiText-2 blocks of 1{,}024
tokens presented as raw text. In the Hugging Face implementation the
sink column is dropped after the softmax, so a head's phantom mass on
a query is one minus the sum of its returned attention weights.
\Cref{tab:gptoss} reports the means over layers and heads. The twelve
sliding-window layers place 48\% of their mass on the phantom,
0.3\% on position 0, and read position 0 at an ordinary norm (ratio
0.98). The twelve full-attention layers place 34\% on the phantom. Six of them
still put 9\% to 17\% of their mass on position 0, but with a position-0
read norm at or above the typical norm (ratios of 1.1 to 2.9), so that
position is read as content rather than crushed into a sink. The learned
sink logits average between 1.1 and 4.1 per
layer and are larger in the full-attention layers. In all three
respects the model behaves as our sink-logit models do, at about
sixty times the size of the largest we train.

\begin{table}[h]
  \caption{Sink measurement on gpt-oss-20b, mean over heads and
  layers of each type: attention mass on the phantom, share
  of the remaining mass on the first key position, and the
  position-0 value-norm ratio.}
  \label{tab:gptoss}
  \begin{center}
    \begin{small}
      \begin{tabular}{lrrr}
\toprule
Layer type & phantom mass & mass on position 0 & $r_0$ ratio \\
\midrule
sliding-window layers (12) & $0.48$ & $0.003$ & $0.98$ \\
full-attention layers (12) & $0.34$ & $0.064$ & $1.31$ \\
\bottomrule
\end{tabular}

    \end{small}
  \end{center}
\end{table}

\section{Mechanism Telemetry}\label{app:telemetry}

\paragraph{Sink analysis.}
\Cref{fig:autopsy} reports the measurement discussed in
\cref{sec:autopsy}. For each 124M model at seed 0, attention weights are
recomputed explicitly on fixed validation batches, and the figure reports the attention mass received by the first key
position and by the phantom where one exists, together with the norm
ratio of the value read at position 0. \Cref{tab:sink} lists the values
at 124M and at 350M.

\begin{table}[h]
  \caption{Sink analysis at the
training context (1024 tokens), seed 0: attention mass on the first key
position as a percentage of the mass on real keys, mass on the phantom
as a percentage of all mass, and the norm ratio of the position-0 read.
The phantom share is zero by construction for variants without one, and
dashes mark variants not trained at 350M.}
\label{tab:sink}
  \begin{center}
    \begin{small}
      \begin{tabular}{lrrrrrr}
\toprule
 & \multicolumn{3}{c}{124M} & \multicolumn{3}{c}{350M} \\
\cmidrule(lr){2-4} \cmidrule(lr){5-7}
Variant & pos. 0 (\%) & phantom (\%) & norm ratio & pos. 0 (\%) & phantom (\%) & norm ratio \\
\midrule
baseline & $6.0$ & $0$ & $0.42$ & $11.1$ & $0$ & $0.36$ \\
norm gate & $3.7$ & $0$ & $0.65$ & -- & -- & -- \\
projection gate & $7.1$ & $0$ & $1.16$ & -- & -- & -- \\
sink logit & $0.5$ & $31.1$ & $0.99$ & $0.7$ & $53.0$ & $1.01$ \\
sink+norm & $0.4$ & $18.6$ & $0.96$ & $0.7$ & $34.6$ & $0.99$ \\
sink+proj & $0.8$ & $20.1$ & $0.99$ & $1.2$ & $23.2$ & $1.04$ \\
\bottomrule
\end{tabular}

    \end{small}
  \end{center}
\end{table}

\paragraph{Selectivity.}
For a trained norm gate, the attention-weighted mean of $g_j$ over the
reads a query attends to measures how much the gate attenuates on
average. The attention-weighted standard deviation of $g_j$ within the
row measures how much it discriminates among reads. A
gate acting purely as an abstention device would have a large between-row spread of the mean and a selectivity of
zero. At seed 0 the selectivity of the norm gate is 0.054, 0.170, and
0.177 at 10M, 50M, and 124M (the norm gate was not run at 350M). That of
the sink+norm variant's gate is 0.025, 0.138, 0.158, and 0.150 at the
four tiers. The same quantity is defined for the
projection gate, and it is larger: 0.092, 0.218, and 0.228 for the
projection gate alone at the first three tiers, and 0.050, 0.217,
0.228, and 0.204 for sink+proj at the four tiers. \Cref{fig:trend}(d) plots the norm gate, sink+norm, and sink+proj
series. The attention-weighted mean
gate at 124M is 0.46 for the norm gate, 0.58
for sink+norm, 0.47 for the projection gate, and 0.52 for sink+proj.
Both gate forms attenuate substantially on average, and each attenuates
less when a phantom is present.

\paragraph{What the filter removes.}
Averaging the gate value assigned to each token over its occurrences
in the validation set gives a picture of what the filter removes. At 124M the most strongly attenuated tokens for the norm gate and the
projection gate are the
newline, the period, the comma, the semicolon, the opening
parenthesis, and the articles and conjunctions (\emph{the},
\emph{The}, \emph{a}, \emph{and}), with mean gate values between 0.36
and 0.43 for the norm gate. The least attenuated tokens are content
carriers such as \emph{which}, \emph{not}, \emph{can}, \emph{will},
\emph{more}, and \emph{you}, with mean gate values between 0.50 and
0.56. The filter therefore attenuates the reads of high-frequency structural
tokens most strongly.

\paragraph{Token identity and content.}
The token table invites the objection that the filter has learned a
prior over token identity rather than a content-dependent rule. We collect every gate value together with the token at its position over
262k validation tokens and compute the fraction of the variance of the
gate values that token identity explains, pooling the between-token and
within-token sums of squares over all heads of a model. Token
identity explains 81\% of the variance for the norm
gate, 77\% for the projection gate, and 81\% and 77\% for sink+norm
and sink+proj, or 70\% to 75\% when only tokens seen at least fifty
times are counted. The share is highest in the first three layers,
where the residual stream is still close to the token embedding
(0.87 to 0.96 for the norm gate), lowest in the fourth and fifth
layers (0.4 to 0.6 for every variant), and between 0.7 and 0.9 in the
later layers. On natural text the trained filter is therefore largely, but not
entirely, a function of token identity. The injection experiments of \cref{sec:fingerprint} show that the gate
also acts on content, since the junk they remove changes a read's norm
or direction without changing its token.

\end{document}